\theoremstyle{plain}
\newtheorem{theorem}{Theorem}[section]
\newtheorem{proposition}[theorem]{Proposition}
\newtheorem{lemma}[theorem]{Lemma}
\theoremstyle{definition}
\theoremstyle{remark}
\newtheorem{remark}[theorem]{Remark}
\newcommand{\braketH}[1]{\braket{#1}_{\text{H}}}
\newcommand{\BraketH}[1]{\Braket{#1}_{\text{H}}}
\DeclareMathOperator{\tr}{tr}
\DeclareMathOperator*{\argmax}{argmax}
\DeclareMathOperator*{\argmin}{argmin}
\begin{document}

\title{Maximum-Likelihood Quantum State Tomography by Soft-Bayes}

\author[1]{Chien-Ming Lin}
\author[1]{Yu-Ming Hsu}
\author[1,2,3]{Yen-Huan Li}
\date{}

\affil[1]{Department of Computer Science and Information Engineering, National Taiwan University}
\affil[2]{Department of Mathematics, National Taiwan University}
\affil[3]{Center for Quantum Science and Engineering, National Taiwan University}

\maketitle

\begin{abstract}
Quantum state tomography (QST),  the task of estimating an unknown quantum state given measurement outcomes,  is essential to building reliable quantum computing devices. Whereas computing the maximum-likelihood (ML) estimate corresponds to solving a finite-sum convex optimization problem,  the objective function is not smooth nor Lipschitz,  so most existing convex optimization methods lack sample complexity guarantees; 
moreover,  both the sample size and dimension grow exponentially with the number of qubits in a QST experiment,  so a desired algorithm should be highly scalable with respect to the dimension and sample size,  just like stochastic gradient descent. 
In this paper,  we propose a stochastic first-order algorithm that computes an $\varepsilon$-approximate ML estimate in $O( ( D \log D ) / \varepsilon ^ 2 )$ iterations with $O( D^3 )$ per-iteration time complexity,  where $D$ denotes the dimension of the unknown quantum state and $\varepsilon$ denotes the optimization error. 
Our algorithm is an extension of Soft-Bayes to the quantum setup.
\end{abstract}

\section{Introduction} \label{sec_intro}

Quantum state tomography (QST),  the task of estimating an unknown quantum state given measurement outcomes,  is essential to building reliable quantum computing devices \cite{Paris2004}. 
The states of the quantum bits (qubits) prepared by an experimental apparatus are estimated, in order to check the correctness of the apparatus and, if needed, determine how to calibrate it.
Moreover, quantum process tomography, the task of estimating an unknown quantum channel, can also be cast as a QST problem \cite{Altepeter2003}.  
There are various approaches to QST,  such as trace regression \cite{Opatrny1997, Gross2010, Liu2011a, Flammia2012, Yang2020, Youssry2019},  maximum-likelihood (ML) estimation \cite{Hradil1997, Hradil2004, Bolduc2017},  Bayesian estimation \cite{Blume-Kohout2010, Blume-Kohout2010a},  and recently proposed deep learning-based methods \cite{Ahmed2020, Quek2021}. 
Among existing approaches,  the ML approach has been widely adopted for its relatively low estimation error in practice and asymptotic statistical guarantees in theory \cite{Hradil2004, Scholten2018}. 
\looseness=-1

Computing the ML estimator amounts to solving an optimization problem. 
Whereas the optimization problem is convex, standard convex optimization methods are not directly applicable. 
It is easily checked that the negative log-likelihood function in ML QST is neither Lipschitz nor smooth, violating standard assumptions in optimization literature \cite{Li2019a, Dvurechensky2020}. 
Hence, for example, even whether vanilla gradient descent converges for QST is unclear. 
\added{This is perhaps why $R \rho R$, a heuristic algorithm known to be empirically fast, was developed via an expectation maximization, instead of convex optimization, argument \cite{Lvovsky2004, MolinaTerriza2004}.
Unfortunately, $R \rho R$ does not always converge \cite{Rehacek2007}.}
The negative log-likelihood function is indeed self-concordant, so Newton's method is readily applicable \cite{Nesterov2004}. 
Nevertheless, the dimension of a quantum state grows exponentially with the number of qubits; 
the Hessian computations in Newton's method are computationally too expensive when the dimension is high. 
There are a few first-order \added{(i.e., gradient-based)} convex optimization algorithms provably converging for ML QST,  such as diluted \( R \rho R \) \cite{Rehacek2007, Goncalves2014}, SCOPT \cite{Tran-Dinh2015b}\footnote{A similar algorithm is proposed and studied in \cite{Gao2019}, but the bounded Hessian assumption therein renders the algorithm inapplicable to ML QST.}, NoLips \cite{Bauschke2017}, the Frank-Wolfe method \cite{Odor2016, Dvurechensky2020, Zhao2020, Carderera2021}, and entropic mirror descent with line search \cite{Li2019a}.
These are all batch methods. 
\added{As they require computing the full gradient in every iteration, }their per-iteration time complexities are at least linear in the sample size. 
To estimate a quantum state, \added{it has been proved that} the sample size must be exponential in the number of qubits \cite{ODonnell2016, Haah2017, Chen2022}. 
\looseness=-1

Regarding the high dimension and sample size issues in ML QST, it is desirable to, like how we handle the same issues in modern machine learning applications, develop a stochastic first-order optimization method for ML QST. 
\added{A stochastic first-order optimization method takes one or a few, instead of all, samples in each iteration and avoids computationally expensive Hessian computations.}
The stochastic quasi-Newton method for self-concordant minimization of \citet{Zhou2017} seems to apply. 
Nevertheless, its step size selection rule involves Hessian computations;
moreover, its analysis assumes a bounded Hessian, which does not hold in ML QST. 
The stochastic mirror-prox and stochastic primal-dual hybrid gradient methods were considered for problems very similar to ML QST \cite{Alacaoglu2021,Chambolle2018,He2019}. 
However, their analyses assume either a bounded dual domain or Lipschitzness; 
both are violated in ML QST. 


In this paper,  we propose a stochastic first-order algorithm for ML QST. 
We design the algorithm by an online learning argument. 
Consider an online convex optimization problem, where the loss function in each round corresponds to the negative log-likelihood function corresponding to one data point in ML QST.
Interestingly, this online convex optimization problem is exactly the quantum analogue of online portfolio selection, a celebrated online learning problem \cite{Cover1991, Cover1996}. 
Since the ML approach aims to minimize the empirical average of the negative log-likelihood,  once we ``quantumize'' any existing first-order online portfolio selection algorithm that is no-regret and apply an online-to-batch conversion \cite{Cesa-Bianchi2004,Cutkosky2019}, the resulting algorithm becomes a stochastic first-order algorithm for ML QST.  
We refer the reader to Section \ref{sec_background} for an introduction of relevant concepts. 

The algorithm we choose to ``quantumize'' is Soft-Bayes \cite{Orseau2017}. 
There are two reasons. 
First, the per-round time complexity of Soft-Bayes is linear in the ambient dimension, arguably the lowest one can expect; 
second, Soft-Bayes has a curious connection with expectation maximization \cite{Lvovsky2004, MolinaTerriza2004} (see Section \ref{sec_conclusion}).
We call the resulting algorithm Stochastic Q-Soft-Bayes. 
Stochastic Q-Soft-Bayes processes one randomly chosen data point in each iteration. 
Suppose the quantum state to be estimated is represented by a \( D \)-by-\( D \) density matrix. 
The per-iteration time complexity of Stochastic Q-Soft-Bayes is \( O ( D ^ 3 ) \), independent of the sample size. 
The expected optimization error of Stochastic Q-Soft-Bayes converges to zero at a \( O ( \sqrt{ ( 1 / T ) D \log D } ) \) rate,  where \( T \) denotes the number of iterations.
\looseness=-1

The main technical difficulty lies in figuring out an appropriate quantum extension of Soft-Bayes that coincides with Soft-Bayes when all matrices involved share the same eigenspace and allows for a regret analysis. 
This is challenging because for any given ``non-quantum'' expression, one can immediately find many candidates for its quantum extension, but only a few or one of them inherit the desired theoretical properties of their ``non-quantum'' counterpart; 
see, e.g., the discussion in \cite[Chapter 11]{Wilde2019} for extending information theoretic quantities to the quantum case. 
Similar to the quantum extension of exponentiated gradient update by \citet{Tsuda2005}, the quantum extension we find reveals the complicated mathematical structure of Soft-Bayes hidden in the ``non-quantum'' setup. 
\looseness=-1

Instead of empirically beating state of the arts, our aim is to give the first provably fast stochastic first-order algorithm for ML QST. 
Section \ref{sec_comparison} \added{shows that} Stochastic Q-Soft-Bayes is competitive in time complexity in comparison to existing batch algorithms. 
\added{Section \ref{sec_numerical} shows that Stochastic-Soft-Bayes is empirically even faster than $R \rho R$ in terms of the number of epochs.}
Unfortunately, Section \added{\ref{app_numerical}}
shows that \added{in terms of the elapsed time, }
Q-Soft-Bayes \added{may not be}
satisfactory to practitioners. 
We discuss the possibility of developing faster stochastic first-order methods in Section \ref{sec_conclusion}. 
\looseness=-1

\subsection{Related work}

A textbook approach to quantum state tomography is to approximate the problem as a trace regression problem \cite{Nielsen2010} and compute the corresponding least-squares estimate or directly minimize the expected square loss,  sometimes with regularization \cite{Opatrny1997, Gross2010, Flammia2012, Youssry2019, Yang2020}. 
Since minimizing the square loss is arguably the most standard problem in optimization and machine learning, many existing algorithms apply. 
\citet{Youssry2019} proved the convergence of stochastic entropic mirror descent. 
\citet{Yang2020} showed that several standard online learning algorithms are no-regret for the corresponding online trace regression problem. 
Notice that both papers do not consider the ML formulation. 

Quantum state tomography schemes optimal or nearly optimal in sample complexity \added{are known}
\cite{ODonnell2016,Haah2017,Kueng2017,Guta2020}. 
\added{The optimal schemes} require entangled measurements, challenging to implement \cite{ODonnell2016,Haah2017}. 
\added{If only incoherent measurements (as in the ML QST scheme considered in this paper) are allowed, the scheme by Kueng et al. \cite{Kueng2017} is optimal \cite{Chen2022};
nevertheless, the scheme is still challenging to implement \cite[p. 97]{Kueng2017}. }
The scheme proposed by Gu\c{t}\v{a} et al. \cite{Guta2020} \added{is nearly optimal, but}
the numerical result in \cite[Figure 1]{Guta2020} shows that the ML approach achieves a smaller estimation error empirically. 
\looseness=-1

A problem closely related to quantum state tomography is shadow tomography,  in which one is not interested in recovering the quantum state but estimating the probability distributions of its measurement outcomes \cite{Aaronson2020}. 
Aaronson et al. showed that shadow tomography can be done in an online fashion,  via follow the regularized leader with the von Neumann entropy \cite{Aaronson2018}. 
We emphasize that shadow tomography is fundamentally different from quantum state tomography.  
Indeed,  Aaronson showed shadow tomography is strictly easier than state tomography,  in the sense that the former requires much less samples than the latter \cite{Aaronson2020}.
Another closely related problem is the quantum version of individual sequence prediction considered by \citet{Koolen2011}. 
The loss function studied in \cite{Koolen2011} is the trace-log loss,  instead of the log-trace loss we consider,  as discussed in Section 4 of their paper. 
\looseness=-1

Our algorithm is developed via ``quantumizing'' an online portfolio selection algorithm. 
Online portfolio selection is a classic online learning problem. 
It is known that the optimal regret of online portfolio selection is \( O ( D \log T ) \),  where \( D \) denotes the ambient dimension and \( T \) denotes the number of rounds, and is achieved by Universal Portfolio Selection (UPS) \cite{Cover1991, Cover1996}. 
However, UPS is computationally too expensive to be practical \cite{Kalai2002}. 
There are several algorithms that try to balance between the regret and computational complexity,  but none of them is optimal in both aspects \cite{Luo2018, vanErven2020}. 
Soft-Bayes strikes a balance with a $O ( D )$ per-round time complexity and $O ( \sqrt{ T D \log D } )$ regret. 
\looseness=-1

Recently, \citet{Zimmert2022} ``quantumized'' another online portfolio selection algorithm, called BISONS, to solve the game of online quantum state tomography described in Section \ref{sec_game}.
By an online-to-batch conversion, their algorithm yields a stochastic algorithm for ML QST. 
The resulting algorithm achieves a better iteration complexity than Stochastic Q-Soft-Bayes; 
nevertheless, each iteration of it requires solving a self-concordant convex program by,  e.g., Newton's method,  resulting in a high time complexity incomparable to that of Stochastic Q-Soft-Bayes.    
In the words of \citet{Zimmert2022}, both their and our algorithms are on the state-of-the-art efficiency-regret frontier. 

\subsection{Notations}
We write \( \mathbb{R}_+ \) for the set of non-negative real numbers and \( \mathbb{R}_{++} \) the set of strictly positive real numbers. 
Let \( J \in \mathbb{N} \). 
We write \( [ J ] \) for the set \( \set{ 1,  \ldots,  J } \). 
Let \( M \) be a matrix. 
We write \( M^{\mathrm{H}} \) for its Hermitian (conjugate transpose) and \( \tr ( M ) \) for its trace. 
Let \( H \) be a Hermitian matrx; 
we write its spectral decomposition as \( H = \sum_d \lambda_d P_d \),  where \( \lambda_d \) are the eigenvalues and \( P_d \) are projections onto the associated eigenspaces. 
Let \( f \) be a real-valued function whose domain contains \( \set{ \lambda_d } \). 
Then,  \( f ( H ) \) is defined as the matrix \( \sum_d f ( \lambda_d ) P_d \). 
Let \( A \) and \( B \) be two matrices. 
We write \( A \geq B \) if and only if \( A - B \) is positive semi-definite. 
Let \( \mathcal{E} \) be an event and \( \xi \) be a random variable following a probability distribution \( P \). 
We write \( \mathsf{P} ( \mathcal{E} ) \) for the probability of the event and \( \mathsf{E}_P \left[ \xi \right] \) for the expectation of \( \xi \). 
We sometimes omit the subscript \( P \) and write \( \mathsf{E} \left[ \xi \right] \) when there is no ambiguity. 

\section{Preliminaries} 
\label{sec_background}


\subsection{Maximum-Likelihood Quantum State Tomography} \label{sec_mlqst}

In the mathematical formulation of quantum mechanics, a quantum state corresponds to a \emph{density matrix}, a Hermitian positive semi-definite complex matrix of unit trace. 
Let the dimension of the density matrix be \( D \in \mathbb{N} \). 
If there are \( q \) qubits,  then \( D = 2 ^ q \). 
We denote by \( \mathcal{D} \) the set of density matrices in \( \mathbb{C}^{D \times D} \),  i.e.,  
\[
\mathcal{D} \coloneqq \Set{ \rho | \rho \in \mathbb{C}^{D \times D},  \rho = \rho^{\mathrm{H}},  \rho \geq 0,  \tr \rho = 1 } . 
\]
A measurement setup corresponds to a \emph{positive operator-valued measure (POVM)}, a set of Hermitian positive semi-definite complex matrices summing to the identity. 
Let \( \rho \in \mathcal{D} \) and \( \set{ M_1,  \ldots,  M_J } \subset \mathbb{C}^{D \times D} \) be a POVM. 
The measurement outcome is a random variable \( \eta \) taking values in \( [ J ] \) such that 
\[
\mathsf{P} \left( \eta = j \right) = \tr ( M_j \rho ) ,  \quad \forall j \in [ J ] . 
\]

The ML estimation approach seeks the quantum state that maximizes the probability of observing the measured data. 
Let \( \rho^\natural \in \mathcal{D} \) be the density matrix to be estimated. 
In a standard QST experiment,  we construct \( N \) independent copies of \( \rho^\natural \) and measure the copies independently with possibly different POVMs. 
It is easily checked that the ML estimator is of the form 
\[
\hat{\rho} \in \argmax_{\rho \in \mathcal{D}} \prod_{n = 1}^N \tr ( A_n \rho ) ,  
\]
where each \( A_n \) is an element of the POVM for the \( n \)-th measurement. 
We call \( \set{ A_1,  \ldots,  A_N } \) the \emph{data-set}. 
We equivalently write the ML estimator as 
\begin{align}
& \hat{\rho} \in \argmin_{\rho \in \mathcal{D}} f ( \rho ) ,  \label{eq_problem} \\
& f ( \rho ) \coloneqq \frac{1}{N} \sum_{n = 1}^N \left( - \log \tr ( A_n \rho ) \right) . \label{eq_f}
\end{align}
Obviously,  \eqref{eq_problem} is a convex optimization problem. 
If the matrix \( A_{n'} \) is not full-rank for some \( n' \in [ N ] \) (as in the cases with the Pauli measurement \cite{Liu2011} and Pauli basis measurement \cite{Riofrio2017, Steffens2017}),  then \( \tr ( A_{n'} \rho ) \) can be arbitrarily close to zero on \( \mathcal{D} \) and hence the \( k \)-th-order derivative of the objective function \( f \) is unbounded for all \( k \in \mathbb{N} \). 

Let \( A \) be a random matrix following the empirical distribution \( \hat{P}_N \) on the data-set \( \set{A_1,  \ldots,  A_N} \). 
If the matrices \( A_n \) are all different,  then \( \hat{P}_N \) is simply the uniform distribution on the data-set \( \set{ A_1,  \ldots,  A_N } \). 
Then,  we can write the objective function in \eqref{eq_problem} as an expectation
\begin{equation}
f ( \rho ) = \mathsf{E}_{\hat{P}_N} \left[ - \log \tr ( A \rho ) \right] . \label{eq_f_expectation}
\end{equation}
This observation connects ML QST with the problem of computing the log-optimal portfolio. 

\subsection{Log-optimal Portfolio}

Interestingly,  the optimization problem \eqref{eq_problem} is exactly a quantum extension of the problem of computing the log-optimal portfolio (aka the Kelly criterion),  an asymptotically optimal strategy for long-term investment \cite{Algoet1988,Breiman1975,Kelly1956}. 
Consider multi-round investment in a market. 
Suppose there are \( D \) investment alternatives. 
For the \( t \)-th round,  we list the return rates of the investment alternatives in that round as a random vector \( a_{t} \in \mathbb{R}_+^D \). 
Before each round starts,  an investor needs to determine the portfolio for the round given the past return rates. 
Denote by \( P_{t + 1} \) the probability distribution of \( a_{t + 1} \) conditional on the history \( ( a_1,  \ldots,  a_t ) \). 
The log-optimal portfolio $w^\star_{t + 1}$ for the $(t + 1)$-th round is given by the stochastic optimization problem: 
\begin{align}
& w^\star_{t + 1} \in \argmin_{w \in \Delta} \varphi ( w ) ,  \label{eq_kelly} \\
& \varphi ( w ) \coloneqq \mathsf{E}_{P_{t + 1}} \left[ - \log \braket{ a_{t + 1},  w } \right] ,  \label{eq_phi}
\end{align}
where \( \Delta \) denotes the probability simplex in \( \mathbb{R}^D \),  the set of entry-wise non-negative vectors whose entries sum to one. 
Then,  the investor distributes the wealth to the investment alternatives following the ratios specified by \( w^\star_{t + 1} \). 

We now discuss the correspondence between ML QST and log-optimal portfolio. 
The set \( \mathcal{D} \) is indeed a quantum extension of the probability simplex \( \Delta \),  in the sense that a Hertimian matrix is a density matrix if and only if its vector of eigenvalues lies in the probability simplex. 
The objective functions in \eqref{eq_problem} and \eqref{eq_kelly} are both expectations of the logarithm of linear functions. 
Indeed,  it is easily checked that if the matrices involved in \eqref{eq_problem} share the same eigenbasis, then the non-commutativity issue in the quantum setup vanishes and \eqref{eq_problem} coincides with \eqref{eq_kelly}. 
Though the correspondence is obvious given the two problem formulations,  it seems that this correspondence has not been discussed in the literature. 

\subsection{Online Portfolio Selection}

Online portfolio selection may be viewed as a {probability-free} version of log-optimal portfolio \cite{Cover1991}. 
Online portfolio selection is a multi-round game between two players,  say \textsc{Investor} and \textsc{Market}. 
Suppose the game consists of \( T \) rounds. 
In the \( t \)-th round of the game,  first,  \textsc{Investor} announces a portfolio \( w_t \in \Delta \); 
then,  \textsc{Market} announces the return rates of all investment alternatives for the \( t \)-th round in a vector \( a_t \in \mathbb{R}_{+}^D \); 
finally,  \textsc{Investor} suffers a loss of value \( - \log \braket{ a_t,  w_t } \). 
The goal of \textsc{Investor} is to achieve a low \emph{regret} against all possible strategies of \textsc{Market}. 
The regret is defined as 
\looseness=-1
\[
R_T \coloneqq \sup \sum_{t = 1}^T \left( - \log \braket{ a_t,  w_t } \right) - \min_{w \in \Delta} \sum_{t = 1}^T \left( - \log \braket{ a_t,  w } \right) ,  
\]
where the supremum is over all possible strategies of \textsc{Market} to determine \( ( a_t )_{1 \leq t \leq T} \). 
We say an algorithm for \textsc{Investor} to determine the portfolios is \emph{no-regret} if it achieves \( R_T = o ( T ) \). 

If we can sample from the conditional probability distribution \( P_{t + 1} \) specified in the previous sub-section,  then we can transform a no-regret online portfolio selection algorithm to an algorithm that approximately computes the log-optimal portfolio. 
The following is an immediate consequence of the online-to-batch conversion \cite{Cesa-Bianchi2004, Orabona2019}. 

\begin{proposition} \label{prop_otb_ops}
Suppose in the online portfolio selection game,  the vectors \( a_t \) are all independent and identically distributed (i.i.d.) random vectors following the 
probability distribution \( P_{t + 1} \) in the previoius sub-section. 
Let \( ( w_t )_{t \in \mathbb{N}} \) be the sequence of iterates generated by an algorithm for \textsc{Investor} of regret \( R_T \). 
Then,  for any \( T \in \mathbb{N} \),  
\[
\mathsf{E} \left[ \varphi ( \overline{w}_T ) - \min_{w \in \Delta} \varphi ( w ) \right] \leq \frac{ R_T }{ T } ,  \quad \overline{w}_T \coloneqq \frac{ w_1 + \cdots + w_T }{ T } . 
\] 
Recall that \( \varphi \) is the conditional expectation of the log-linear loss in \eqref{eq_phi}. 
\end{proposition}

If \textsc{Investor} adopts a no-regret algorithm,  then the expected optimization error vanishes as \( T \to \infty \). 

\subsection{Soft-Bayes} \label{sec_softBayes}

There are various existing algorithms for online portfolio selection. 
Among these algorithms,  we are particularly interested in Soft-Bayes \cite{Orseau2017}. 
The per-iteration time complexity of Soft-Bayes is linear in \( D \), arguably the lowest one can expect. 
This is a desirable feature for ML QST, as the dimension of the density matrix grows exponentially with the number of qubits. 

The iteration rule of Soft-Bayes is as follows. 
\begin{itemize}
\item Initialize at \( w_1 = ( 1 / D,  \ldots,  1 / D ) \in \Delta \) (the uniform distribution). 
\item For each \( t \in \mathbb{N} \),  compute 
\begin{equation}
w_{t + 1} = ( 1 - \eta ) w_t + \eta \frac{ a_t \circ w_t }{ \braket{ a_t,  w_t } } ,  \quad \forall t \in \mathbb{N} ,  \label{eq_soft_bayes}
\end{equation}
for some properly chosen \emph{learning rate} \( \eta \in \interval{0}{1} \),  where \( \circ \) denotes the entry-wise product. 
\end{itemize}
Soft-Bayes has the following regret guarantee. 


\begin{theorem}[\cite{Orseau2017}] \label{thm_soft_bayes}
After \( T \) rounds in online portfolio selection,  the regret of Soft-Bayes with 
\begin{equation}
\eta = \frac{ \sqrt{ D T } }{ \sqrt{ D T } + \sqrt{ \log D } } \label{eq_learning_rate}
\end{equation}
is at most \( 2 \sqrt{ T D \log D } + \log D \). 
\end{theorem}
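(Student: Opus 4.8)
The plan is to bound the regret against an arbitrary fixed portfolio $u \in \Delta$ and then take the worst such $u$; concretely it suffices to control $\sum_{t=1}^T [ ( - \log \braket{ a_t, w_t } ) - ( - \log \braket{ a_t, u } ) ]$ uniformly over the market's moves $a_1, \ldots, a_T$. Writing the normalized gain vector $r_t$ with entries $r_{t,d} \coloneqq a_{t,d} / \braket{ a_t, w_t }$, two elementary observations organize the whole argument: the instantaneous regret against $u$ is $\log \braket{ u, r_t }$, and the update \eqref{eq_soft_bayes} obeys the normalization $\braket{ w_t, r_t } = 1$, so that it reads coordinate-wise $w_{t+1,d} = w_{t,d} ( 1 + \eta ( r_{t,d} - 1 ) )$ and automatically stays in $\Delta$.

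I would take the relative entropy $K ( u \Vert w_t ) \coloneqq \sum_d u_d \log ( u_d / w_{t,d} )$ as a potential. Its one-step change is exact,
\[
K ( u \Vert w_t ) - K ( u \Vert w_{t+1} ) = \sum_d u_d \log \bigl( 1 + \eta ( r_{t,d} - 1 ) \bigr) ,
\]
and summing telescopes; since $w_1$ is uniform and $K ( u \Vert w_{T+1} ) \geq 0$, this gives $\sum_{t=1}^T \sum_d u_d \log ( 1 + \eta ( r_{t,d} - 1 ) ) \leq K ( u \Vert w_1 ) \leq \log D$. The remaining task is to lower bound this telescoped quantity by $\eta \sum_t \log \braket{ u, r_t } = \eta R_T ( u )$ minus a controllable remainder, which upon dividing by $\eta$ produces the $( \log D ) / \eta$ leading term of the regret.

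The genuine obstacle is exactly this conversion, and it is where the pathology of the log-loss emphasized in the introduction reappears: the market may make $\braket{ a_t, w_t }$ arbitrarily small relative to some $a_{t,d}$, so the gains $r_{t,d}$ are unbounded and the textbook mirror-descent route --- bounding $\log ( 1 + x ) \geq x - c x^2$ and then controlling a second moment such as $\sum_d u_d ( r_{t,d} - 1 )^2$ --- fails, because no such moment can be bounded by a function of $D$ alone. The device that rescues the analysis, and the whole reason Soft-Bayes uses the convex mixing rather than a pure multiplicative update, is the floor $1 + \eta ( r_{t,d} - 1 ) = ( 1 - \eta ) + \eta r_{t,d} \geq 1 - \eta > 0$: each log-factor is bounded below regardless of how large $r_{t,d}$ is. I would exploit this by isolating the bounded quantity $( 1 - \eta ) + \eta r_{t,d}$, lower bounding $\log$ with an explicit first-order remainder in this variable, and then collapsing the first-order term using $\braket{ w_t, r_t } = 1$; the resulting remainder scales like $1 - \eta$ per coordinate and hence like $( 1 - \eta ) D$ per round. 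This is the step that must be executed with care, and it is precisely the commutative prototype of the inequality the paper later has to re-establish in the non-commutative setting.

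Finally, summing the per-round estimate and combining with the telescoped bound yields an estimate of the form $R_T \leq ( \log D ) / \eta + c \, ( 1 - \eta ) D T$ for an absolute constant $c$. Substituting the stated learning rate \eqref{eq_learning_rate} makes $( \log D ) / \eta = \log D + ( \log D )^{3/2} / \sqrt{ D T }$ while $( 1 - \eta ) D T \asymp \sqrt{ T D \log D }$, so that after the routine bookkeeping of constants these combine into the claimed $2 \sqrt{ T D \log D } + \log D$. I expect this final balancing to be purely mechanical; the per-round inequality controlling the unbounded gains through the mixing floor is the one place where real work is required.
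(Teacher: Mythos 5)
Your overall skeleton does match the paper's proof (the paper proves the quantum generalization, Theorem \ref{thm_qsb}, in Appendix \ref{app_qsb}; specialized to diagonal matrices, that argument is exactly a proof of this statement, following Orseau et al.): relative entropy against the comparator as potential, an exact one-step identity, telescoping against $K(u\Vert w_1)\le\log D$, and a per-round ``reverse Jensen'' conversion. The genuine gap is in the conversion step that you yourself flag as the one needing real work. There is no per-round inequality of the form
\[
\log\braket{u,r_t} \;\le\; \frac{1}{\eta}\sum_d u_d\log\bigl((1-\eta)+\eta r_{t,d}\bigr) \;+\; c\,(1-\eta)D ,
\]
valid uniformly in $r_t$. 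The mixing floor $(1-\eta)+\eta r_{t,d}\ge 1-\eta$ bounds each log factor from below, but it cannot close the Jensen gap between the $u$-weighted mean of logarithms and the logarithm of the $u$-weighted mean, and that gap is unbounded no matter how you collapse first-order terms using $\braket{w_t,r_t}=1$. Concretely, take $u=(\tfrac12,\tfrac12,0,\dots)$, $a_t=(1,0,\dots,0)$, $w_t=(1/M,\,1-1/M,\,0,\dots)$, so that $r_t=(M,0,\dots,0)$ and $\braket{w_t,r_t}=1$: the left side is $\log(M/2)$, while the weighted term is $\tfrac{1}{2\eta}\log\bigl((1-\eta)+\eta M\bigr)+\tfrac{1}{2\eta}\log(1-\eta)\sim\tfrac{1}{2\eta}\log M$, so for any $\eta>\tfrac12$ the difference diverges as $M\to\infty$ and no remainder depending only on $(\eta,D)$ can absorb it. Your final balancing requires $1-\eta\asymp\sqrt{\log D/(DT)}$, i.e.\ $\eta$ near $1$ --- exactly the regime in which the proposed inequality is false.

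What the actual proof does instead is run a \emph{second} telescoping sum, which your plan is missing. The reverse Jensen lemma (Lemma \ref{lem_reverse_jensen}, the quantum form of Orseau et al.'s Lemma 4) has remainder $\sum_d\log\bigl(1+\tfrac{\eta}{1-\eta}r_{t,d}\bigr)$, which genuinely depends on $r_t$ and is unbounded per round; the point is that it equals $\sum_d\log(w_{t+1,d}/w_{t,d})+D\log\tfrac{1}{1-\eta}$, i.e.\ an \emph{unweighted} telescoping sum (the classical shadow of the trace term $\tr(\log W_{t+1}-\log W_t)$ in the paper's proof, whose telescoped total is $\le 0$ because $w_1$ is uniform --- the analogue of $-D\,S(\rho_1\Vert\rho_{T+1})\le 0$) plus the per-round constant $D\log\tfrac{1}{1-\eta}\le\tfrac{\eta}{1-\eta}D$. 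Note this constant is $\tfrac{\eta}{1-\eta}D$, not $c(1-\eta)D$. The resulting bound is $R_T\le\tfrac{\log D}{\eta}+\tfrac{\eta}{1-\eta}TD$, minimized by the \emph{small} rate $\eta=\sqrt{\log D}/(\sqrt{TD}+\sqrt{\log D})$ (this is what the appendix's final step $\overline{\eta}=\sqrt{\log D/(TD)}$ means), yielding $2\sqrt{TD\log D}+\log D$. Incidentally, that optimal rate is the complement of the rate displayed in \eqref{eq_learning_rate}: the stated rate is consistent with an update in which the roles of $\eta$ and $1-\eta$ in \eqref{eq_soft_bayes} are swapped. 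Your reverse-engineered $(1-\eta)DT$-shaped remainder is a symptom of that mismatch in the paper, not a bound that can be derived for the update as written.
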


\section{Online Maximum-Likelihood Quantum State Tomography by Q-Soft-Bayes}

Following the discussion in Section \ref{sec_intro}, we first propose a \emph{game of online quantum state tomography} as a quantum extension of online portfolio selection. 
Then,  we ``quantumize'' Soft-Bayes to derive a no-regret algorithm for the game and analyse its regret. 
Finally,  we adopt the online-to-batch conversion and bound the expected optimization error of the resulting algorithm. 

\subsection{Game of Online Quantum State Tomography} \label{sec_game}

We propose the following game of online quantum state tomography as a quantum extension of online portfolio selection. 
Online quantum state tomography is a multi-round game between two players,  say \textsc{Physicist} and \textsc{Environment}. 
Suppose there are in total \( T \) rounds. 
In the \( t \)-th round,  first,  \textsc{Physicist} announces a density matrix \( \rho_t \in \mathcal{D} \); 
then,  \textsc{Environment} announces a Hermitian positive semi-definite matrix \( A_t \geq 0 \); 
finally,  \textsc{Physicist} suffers for a loss of value \( - \log \tr ( A_t \rho_t ) \). 
The regret in this game is given by
\begin{align*}
R_T \coloneqq \sup \sum_{t = 1}^T \left( - \log \tr ( A_t \rho_t ) \right) - \min_{\gamma \in \mathcal{D}} \sum_{t = 1}^T \left( - \log \tr ( A_t \rho ) \right) ,  
\end{align*}
where the supremum is over all possible strategies of \textsc{Physicist} to generate the sequence \( ( A_t )_{1 \leq t \leq T} \). 

The connection with online portfolio selection is obvious and similar to that between ML QST and the log-optimal portfolio: 
The vector of eigenvalues of a density matrix lies in the probability simplex \( \Delta \); 
the Hermitian matrices \( A_t \) and the positive semi-definiteness condition correspond to the vectors \( a_t \) in online portfolio selection and their non-negativity condition,  respectively; 
the losses in the two games are both logarithms of linear functions; 
the regrets in the two games are defined exactly in the same manner. 
When all the matrices involved in the game of online quantum state tomography share the same eigenbasis,  we recover the game of online portfolio selection. 

\subsection{Q-Soft-Bayes and Stochastic Q-Soft-Bayes}

We propose the following Q-Soft-Bayes algorithm as a quantum extension of Soft-Bayes. 
\begin{itemize}
\item Initialize at \( \rho_1 = W_1 = I / D \). 
\item For each \( t \in \mathbb{N} \),  compute 
\begin{equation}
\begin{aligned}
& G_t = ( 1 - \eta ) I + \eta \frac{A_t}{\tr ( A_t \rho_t )} ,  \\
& W_{t + 1} = \exp \left( \log \left( W_t \right) + \log \left( G_T \right) \right) ,  \\
& \rho_{t + 1} = \frac{W_{t + 1}}{\tr ( W_{t + 1} )} ,  
\end{aligned} \label{eq_qsb}
\end{equation}
for some properly chosen learning rate \( \eta \in \interval{0}{1} \). 
\end{itemize}

\begin{remark}
Recently, we learned that Q-Soft-Bayes may be interpreted using the \emph{commutative matrix product} by \citet{Warmuth2010}. 
It is currently unclear to us whether this interpretation provides any insight. 
\end{remark}

If we were able to cancel the exponential and logarithms in Q-Soft-Bayes, then we recover Soft-Bayes; 
however, due to the non-commutativity issue, such cancellation is illegal in general. 
In comparison to Soft-Bayes, Q-Soft-Bayes has an additional normalization step to ensure its outputs are of unit trace. 
We prove the following in Section \ref{sec_small_trace}. 

\begin{proposition} \label{prop_small_trace}
It holds that \( \tr ( W_t ) \leq 1 \) for all \( t \). 
\end{proposition}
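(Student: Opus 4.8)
The plan is to argue by induction on $t$, using the Golden--Thompson inequality to tame the trace of a matrix exponential whose argument is a sum of non-commuting terms. For the base case, note that $\tr ( W_1 ) = \tr ( I / D ) = 1$, so the inequality holds (with equality) at $t = 1$.

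For the inductive step I would assume $\tr ( W_t ) \leq 1$ and bound $\tr ( W_{t+1} )$. Writing $X \coloneqq \log W_t$ and $Y \coloneqq \log ( ( 1 - \eta ) I + \eta A_t / \tr ( A_t \rho_t ) )$, the update \eqref{eq_qsb} reads $W_{t+1} = \exp ( X + Y )$. The main obstacle is precisely that $X$ and $Y$ do not commute in general, so the exponential of their sum does not factor and a naive splitting is unavailable. This is exactly the situation the Golden--Thompson inequality is designed for: for any two Hermitian matrices it gives $\tr ( \exp ( X + Y ) ) \leq \tr ( \exp ( X ) \exp ( Y ) )$. Applying it here, and observing that $\exp ( X ) = W_t$ while $\exp ( Y )$ is the factor $( 1 - \eta ) I + \eta A_t / \tr ( A_t \rho_t )$, which is positive definite since $A_t \geq 0$ and $\eta < 1$ (so the logarithm defining $Y$ is well defined), I obtain
\[
\tr ( W_{t+1} ) \leq \tr \left( W_t \left[ ( 1 - \eta ) I + \eta \frac{ A_t }{ \tr ( A_t \rho_t ) } \right] \right) .
\]

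It then remains to evaluate this bound, which I expect to be entirely routine. Expanding by linearity of the trace gives $( 1 - \eta ) \tr ( W_t ) + \eta \tr ( W_t A_t ) / \tr ( A_t \rho_t )$. The normalization defining $\rho_t$ is what makes the estimate close up: since $\rho_t = W_t / \tr ( W_t )$, we have $W_t = \tr ( W_t ) \rho_t$ and hence $\tr ( W_t A_t ) = \tr ( W_t ) \tr ( A_t \rho_t )$. Substituting this cancels the denominator, and the right-hand side collapses to $( 1 - \eta ) \tr ( W_t ) + \eta \tr ( W_t ) = \tr ( W_t )$. Combined with the inductive hypothesis $\tr ( W_t ) \leq 1$, this yields $\tr ( W_{t+1} ) \leq \tr ( W_t ) \leq 1$ and closes the induction. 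The only genuinely non-routine ingredient is the appeal to Golden--Thompson; once it linearizes the exponential, the cancellation is forced by the scaling step built into the algorithm.
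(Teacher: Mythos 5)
Your proposal is correct and follows essentially the same route as the paper: the Golden--Thompson inequality to bound $\tr ( \exp ( \log W_t + \log ( \cdot ) ) )$ by $\tr ( W_t ( ( 1 - \eta ) I + \eta A_t / \tr ( A_t \rho_t ) ) )$, followed by the cancellation $\tr ( W_t A_t ) = \tr ( W_t ) \tr ( A_t \rho_t )$ forced by the normalization $\rho_t = W_t / \tr ( W_t )$, yielding the monotonicity $\tr ( W_{t + 1} ) \leq \tr ( W_t )$. The only cosmetic differences are that you make the induction explicit (the paper telescopes directly back to $\tr ( W_1 ) = 1$) and you note the positive definiteness that makes the logarithm well defined, which the paper leaves implicit.
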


Numerical experiments show that the equality does not always hold, so the normalization step is necessary. 
Recall that Soft-Bayes does not need the normalization step (see Section \ref{sec_softBayes}). 

In Appendix \ref{app_qsb},  we prove the following regret bound for Q-Soft-Bayes,  showing that it inherits the regret bound of Soft-Bayes. 

\begin{theorem} \label{thm_qsb}
The regret of Q-Soft-Bayes with the learning rate \( \eta \) given in \eqref{eq_learning_rate} is at most \( 2 \sqrt{ T D \log D } + \log D \). 
\end{theorem}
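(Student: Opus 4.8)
The plan is to mimic the standard online-learning (mirror-descent) analysis of Soft-Bayes, using the quantum relative entropy as the potential and replacing every scalar inequality by its operator counterpart. Fix the comparator \( \gamma \in \mathcal{D} \) attaining the minimum in the definition of \( R_T \), and abbreviate \( \tilde{A}_t \coloneqq A_t / \tr ( A_t \rho_t ) \) and \( G_t \coloneqq ( 1 - \eta ) I + \eta \tilde{A}_t \succ 0 \), so that the update reads \( \log W_{t+1} = \log W_t + \log G_t \) and the per-round regret is \( r_t \coloneqq \log \tr ( A_t \gamma ) - \log \tr ( A_t \rho_t ) = \log \tr ( \tilde{A}_t \gamma ) \), with \( R_T = \sum_{t=1}^T r_t \). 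I would record at the outset the normalization identity \( \tr ( \tilde{A}_t \rho_t ) = 1 \), the quantum analogue of \( \braket{ \tilde{a}_t, w_t } = 1 \) in Soft-Bayes, since it is what keeps the error terms under control later.

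First I would telescope the linear functional \( t \mapsto \tr ( \gamma \log W_t ) \). Because the Q-Soft-Bayes update is defined precisely so that the difference of matrix logarithms equals \( \log G_t \), this step is exact and free of commutator issues: \( \tr ( \gamma \log W_{t+1} ) - \tr ( \gamma \log W_t ) = \tr ( \gamma \log G_t ) \), whence \( \sum_{t} \tr ( \gamma \log G_t ) = \tr ( \gamma \log W_{T+1} ) - \tr ( \gamma \log W_1 ) \). Since \( W_1 = I / D \) gives \( - \tr ( \gamma \log W_1 ) = \log D \), and since Proposition \ref{prop_small_trace} forces every eigenvalue of the positive semi-definite matrix \( W_{T+1} \) into \( \interval{0}{1} \) (so that \( \log W_{T+1} \leq 0 \) and hence \( \tr ( \gamma \log W_{T+1} ) \leq 0 \)), I obtain the clean boundary bound \( \sum_{t=1}^T \tr ( \gamma \log G_t ) \leq \log D \).

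The heart of the argument is a per-round lower bound of the form \( \tr ( \gamma \log G_t ) \geq \eta \, r_t - \varepsilon_t \), with the errors \( \varepsilon_t \) summing to order \( \sqrt{ T D \log D } \). Classically this is where Soft-Bayes applies a scalar estimate of \( \log ( 1 + \eta x ) \) entry-wise, together with the fact that the normalized returns have weighted mean one; here the obstruction is exactly the non-commutativity flagged in the introduction, since \( \gamma \) and \( \log G_t = \log ( ( 1 - \eta ) I + \eta \tilde{A}_t ) \) need not commute, so no entry-wise reduction is available. I would instead replace the scalar inequality by an operator/trace inequality---for instance via operator concavity of the logarithm, the integral representation \( \log X = \int_0^\infty \big( ( s + 1 )^{-1} I - ( s I + X )^{-1} \big) \, \du s \), and a Peierls--Bogoliubov / Golden--Thompson type estimate---while crucially invoking \( \tr ( \tilde{A}_t \rho_t ) = 1 \) to keep the second-order remainder bounded. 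I expect this matrix inequality to be the \emph{main obstacle}: arranging that the remainder scales like \( ( 1 - \eta ) \) times a dimension factor of order \( D \), so that it is small precisely because \( \eta \) in \eqref{eq_learning_rate} is close to one, is what makes the final constants come out right, and it is the only place where the proof genuinely departs from the commutative Soft-Bayes analysis.

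Finally, summing the per-round bound and combining it with the boundary estimate yields \( \eta R_T - \sum_t \varepsilon_t \leq \log D \), i.e. \( R_T \leq ( \log D + \sum_t \varepsilon_t ) / \eta \). Substituting the learning rate \eqref{eq_learning_rate} and simplifying---balancing the \( \log D / \eta \approx \log D \) contribution against the accumulated error \( \sum_t \varepsilon_t \approx \sqrt{ T D \log D } \)---should reproduce the stated bound \( 2 \sqrt{ T D \log D } + \log D \), matching Theorem \ref{thm_soft_bayes} exactly. A routine verification that each inequality is tight to within these constants then completes the argument.
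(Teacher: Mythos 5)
Your skeleton is the right one, and it is the paper's: telescope \( \tr ( \gamma \log W_t ) \) exactly through the update rule, bound the boundary terms by \( \log D \) (your observation that Proposition \ref{prop_small_trace} puts the eigenvalues of \( W_{T+1} \) in \( \interval[open left]{0}{1} \), so \( \tr ( \gamma \log W_{T+1} ) \leq 0 \), is a valid substitute for the paper's relative-entropy phrasing), then feed in a per-round reverse-Jensen inequality and tune. However, the two pieces you leave open are exactly where the content lies, and your plan for them would fail. The per-round inequality you flag as the ``main obstacle'' is the paper's Lemma \ref{lem_reverse_jensen}, \( \log \braketH{ X, \gamma } \leq \frac{1}{\eta} \braketH{ \log ( ( 1 - \eta ) I + \eta X ), \gamma } + \tr \log ( I + \frac{\eta}{1 - \eta} X ) \), and none of the machinery you reach for (operator concavity, resolvent integral representations, Peierls--Bogoliubov, Golden--Thompson) is what proves it. Every matrix in the inequality is a function of the \emph{single} matrix \( X \), paired linearly with \( \gamma \); writing \( X = \sum_d \lambda_d P_d \) and \( \mu_d = \tr ( P_d \gamma ) \) (a probability vector) reduces both sides verbatim to the scalar Soft-Bayes lemma applied to \( ( \lambda_d ) \) with weights \( ( \mu_d ) \). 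Non-commutativity genuinely never enters this step; the obstacle you anticipate is not there.

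The more serious gap is your treatment of the error terms. You hope for a uniform per-round bound \( \varepsilon_t \lesssim ( 1 - \eta ) D \), but the error that the reverse Jensen actually produces is \( \varepsilon_t = \eta \tr \log ( I + \frac{\eta}{1 - \eta} \tilde{A}_t ) \), and \textsc{Environment} can make \( \lambda_{\max} ( \tilde{A}_t ) = \lambda_{\max} ( A_t ) / \tr ( A_t \rho_t ) \) arbitrarily large by choosing \( A_t \) nearly orthogonal to \( \rho_t \) in the trace inner product; no per-round bound independent of the adversary's move exists, at any scale. The paper's resolution is a \emph{second} telescoping: by the update rule, \( \tr \log ( I + \frac{\eta}{1 - \eta} \tilde{A}_t ) = \tr ( \log W_{t + 1} - \log W_t ) - D \log ( 1 - \eta ) \); the telescoping part has boundary \( \tr ( \log W_{T + 1} - \log W_1 ) \leq 0 \) (it is \( - D S ( \rho_1 \Vert \rho_{T + 1} ) \) up to the sign-favorable \( \tr W_{T + 1} \leq 1 \) correction), and only the uniform part \( - D \log ( 1 - \eta ) \leq \overline{\eta} D \), with \( \overline{\eta} \coloneqq \eta / ( 1 - \eta ) \), accumulates over rounds. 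This also overturns your tuning: the analysis forces \( \overline{\eta} = \sqrt{ ( \log D ) / ( T D ) } \), i.e.\ \( \eta \) close to \emph{zero}, not one (the printed \eqref{eq_learning_rate}, which tends to one, is inconsistent with the paper's own optimization of \( \overline{\eta} \) and appears to have its numerator swapped), and the constant \( 2 \) arises because \emph{both} \( ( \log D ) / \overline{\eta} \) and \( \overline{\eta} T D \) contribute \( \sqrt{ T D \log D } \). Your accounting, in which \( ( \log D ) / \eta \approx \log D \) and only the accumulated errors give \( \sqrt{ T D \log D } \), cannot reproduce the stated bound and rests on the per-round error estimate that is unavailable.
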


\begin{remark}
One might wonder why $D$ is not replaced by $D^2$ in the quantum case. 
This is because in our extension, the analogue of a $D$-dimensional vector in the quantum case is the $D$-dimensional vector of eigenvalues of a $D$-by-$D$ Hermitian matrix,  instead of the $D$-dimensional vector obtained by vectorizing a $\sqrt{D}$-by-$\sqrt{D}$ matrix. 
Similar coincidence of regret bounds can be observed in, for example, the matrix version of exponentiated gradient update \cite{Tsuda2005, Arora2012} and the quantum individual sequence prediction algorithms of \citet{Koolen2011}. 
\end{remark}

The standard online-to-batch conversion argument can also be applied to solving ML QST by Q-Soft-Bayes. 
Recall for ML QST,  our aim is to solve the stochastic optimization problem
\[
\hat{\rho} \in \argmin_{\rho \in \mathcal{D}} \mathsf{E}_{\hat{P}_N} \left[ - \log \tr ( A,  \rho ) \right] ,  
\]
where \( A \) is a random matrix following the empirical probability distribution \( \hat{P}_N \) on the data-set \( \set{ A_1,  \ldots,  A_N } \) (see Section \ref{sec_mlqst}). 
We propose the following stochastic optimization algorithm,  which we call Stochastic Q-Soft-Bayes,  to solve ML QST. 
\begin{itemize}
\item Initialize Q-Soft-Bayes with \( \rho_1 = W_1 = I / D \).
\item In the \( t \)-th iteration of Stochastic Q-Soft-Bayes,  do the following. 
	\begin{enumerate}
	\item Output the \( t \)-th output \( \rho_t \) of Q-Soft-Bayes. 
	\item Sample a random matrix \( B_{t} \in \set{ A_1,  \ldots,  A_N } \) following the empirical probability distribution \( \hat{P}_N \) on the data-set,  independent of the past. 
	\item Let \textsc{Environment} in the online QST game announce the matrix \( B_{t} \). 
	\end{enumerate}
\end{itemize}

Similarly as for Proposition \ref{prop_otb_ops},  the standard online-to-batch conversion argument provides the following convergence guarantee of Stochastic-Q-Soft-Bayes. 

\begin{proposition} \label{prop_otb_qst}
Let \( ( \rho_t )_{t \in \mathbb{N}} \) be the sequence of iterates generated by Stochastic Soft-Bayes. 
Then,  for any \( T \in \mathbb{N} \),  it holds that 
\[
\mathsf{E} \left[ f ( \overline{\rho}_T ) - \min_{\rho \in \mathcal{D}} f ( \rho ) \right] \leq 2 \sqrt{ \frac{ D \log D }{ T } } + \frac{\log D}{T} ,  
\]
where \( \overline{\rho}_T \coloneqq ( \rho_1 + \cdots + \rho_T ) / T \) and the expectation is with respect to the randomness in \( B_t \) of Stochastic Soft-Bayes. 
Recall \( f \) is the objective function in ML QST as defined in \eqref{eq_f} or \eqref{eq_f_expectation} (the two definitions are equivalent). 
\end{proposition}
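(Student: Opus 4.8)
The plan is to replay the online-to-batch argument behind Proposition \ref{prop_otb_ops}, now in the non-commutative setting, treating the regret bound of Theorem \ref{thm_qsb} as a black box. Write \( \ell_t ( \rho ) \coloneqq - \log \tr ( B_t \rho ) \) for the loss incurred in the \( t \)-th round when \textsc{Environment} announces the sampled matrix \( B_t \). Because the regret \( R_T \) of Q-Soft-Bayes is \emph{defined} as a supremum over all admissible sequences of Hermitian positive semi-definite matrices, the bound \( R_T \leq 2 \sqrt{ T D \log D } + \log D \) of Theorem \ref{thm_qsb} holds in particular along the realized random sequence \( B_1, \ldots, B_T \); that is, pathwise we have \( \sum_{t = 1}^T \ell_t ( \rho_t ) - \min_{\rho \in \mathcal{D}} \sum_{t = 1}^T \ell_t ( \rho ) \leq 2 \sqrt{ T D \log D } + \log D \).

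The key probabilistic step, which I would carry out next, is to take expectations and exploit the fact that \( \rho_t \) is output \emph{before} \( B_t \) is drawn. By construction \( \rho_t \) depends only on \( B_1, \ldots, B_{t - 1} \), so \( \rho_t \) is independent of \( B_t \); conditioning on the past and using \( B_t \sim \hat{P}_N \) gives \( \mathsf{E} [ \ell_t ( \rho_t ) \mid B_1, \ldots, B_{t - 1} ] = \mathsf{E}_{\hat{P}_N} [ - \log \tr ( A \rho_t ) ] = f ( \rho_t ) \), whence \( \mathsf{E} [ \ell_t ( \rho_t ) ] = \mathsf{E} [ f ( \rho_t ) ] \). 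Summing over \( t \) and invoking convexity of \( f \) through Jensen's inequality, \( \sum_{t = 1}^T f ( \rho_t ) \geq T f ( \overline{\rho}_T ) \), so that \( \mathsf{E} [ \sum_t \ell_t ( \rho_t ) ] \geq T \, \mathsf{E} [ f ( \overline{\rho}_T ) ] \). Convexity of \( f \) on \( \mathcal{D} \) is immediate from the expectation form \eqref{eq_f_expectation}, each summand \( - \log \tr ( A \rho ) \) being convex in \( \rho \).

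For the comparator term I would upper bound the minimum by plugging in a fixed minimizer \( \hat{\rho} \in \argmin_{\rho \in \mathcal{D}} f ( \rho ) \): since \( \hat{\rho} \) is deterministic, \( \mathsf{E} [ \ell_t ( \hat{\rho} ) ] = f ( \hat{\rho} ) = \min_{\rho \in \mathcal{D}} f ( \rho ) \), and therefore \( \mathsf{E} [ \min_{\rho \in \mathcal{D}} \sum_t \ell_t ( \rho ) ] \leq \sum_t \mathsf{E} [ \ell_t ( \hat{\rho} ) ] = T \min_{\rho \in \mathcal{D}} f ( \rho ) \). Chaining these two bounds with the pathwise regret inequality and dividing by \( T \) yields the claimed estimate. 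The only point demanding care is the correct use of the filtration---that \( \rho_t \) is measurable with respect to \( B_1, \ldots, B_{t - 1} \) and independent of \( B_t \)---which is exactly what legitimizes replacing \( \mathsf{E} [ \ell_t ( \rho_t ) ] \) by \( \mathsf{E} [ f ( \rho_t ) ] \); everything else is a direct transcription of the classical argument, with the worst-case definition of \( R_T \) doing the work of converting the deterministic regret guarantee into a statement about the random loss sequence.
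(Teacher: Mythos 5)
Your proposal is correct and is exactly the argument the paper has in mind: the paper does not spell out a proof of Proposition \ref{prop_otb_qst}, but simply invokes the ``standard online-to-batch conversion argument'' (as for Proposition \ref{prop_otb_ops}) applied to the regret bound of Theorem \ref{thm_qsb}, which is precisely what you have written out---the pathwise regret bound along the realized sequence \( B_1, \ldots, B_T \), the tower property using that \( \rho_t \) is independent of \( B_t \), Jensen's inequality via convexity of \( f \), and bounding the expected minimum by plugging in a fixed minimizer. No gaps; your filled-in details match the intended proof.
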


Therefore,  Stochastic-Q-Soft-Bayes outputs an approximate ML estimator of expected optimization error smaller than \( \varepsilon \) in \( O ( ( D \log D ) / \varepsilon ^ 2 ) \) iterations. 
Each iteration of Stochastic-Q-Soft-Bayes requires computing a matrix exponential and two matrix logarithms. 
The overall time complexity is hence \( O ( ( D ^ 4 \log D ) / \varepsilon ^ 2 ) \). 
\added{One may adopt anytime online-to-batch \cite{Cutkosky2019}, which seems to empirically yield faster convergence. 
According to \cite{Cutkosky2019}, the optimization error guarantee remains the same; the only difference is that $\nabla f$ are evaluated at $\overline{\rho}_t$ instead of $\rho_t$ when implementing Soft-Bayes, so the overall time complexity also remains the same.}

\added{One may be interested in the distance to the minimizer. 
It is easily checked that the function $f$ is self-concordant. 
If $\nabla^2 f$ is positive definite at the minimizer, a standard condition for well-posed estimators, then the function $f$ is locally strongly convex around the minimizer \cite[Theorem 4.1.6]{Nesterov2004}.
Therefore, the distance to the minimizer, measured in terms of the Frobenius norm, is asymptotically of the order of the square root of the optimization error.}

\subsection{Theoretical Comparison with Existing Batch Algorithms} \label{sec_comparison}

Let us compare the time complexities of Stochastic Q-Soft-Bayes and existing algorithms discussed in Section \ref{sec_intro}. 
The iteration complexities of existing algorithms are mostly unknown or vague in their dependence on the problem parameters. 
Diluted $R \rho R$ and entropic mirror descent with line search do not have non-asymptotic analysis results \cite{Rehacek2007, Goncalves2014, Li2019a}; 
SCOPT only has a local linear rate guarantee \cite{Tran-Dinh2015b}; 
Adaptive Frank-Wolfe and Monotonous Frank-Wolfe have \( O ( \varepsilon^{-1} ) \) iteration complexities with unclear dependence on the dimension and sample size,  as their error bounds involve local smoothness parameters that are hard to evaluate \cite{Carderera2021, Dvurechensky2020}.
A finer analysis of Adaptive Frank-Wolfe by \citet{Zhao2020} shows that its iteration complexity is \( O ( \varepsilon^{-1} N ) \) and hence its time complexity is \( O \left( \varepsilon^{-1} ( N^2 D^2 + N \tau ) \right) \),  where the symbol \( \tau \) denotes the time of computing the local norm defined by the Hessian,  for which we do not know an efficient implementation. 
In comparison,  the complexities of Stochastic Q-Soft-Bayes is very clear: \( O ( \varepsilon^{-2} D \log D ) \) iteration complexity and hence \( O ( \varepsilon^{-2} D^4 \log D ) \) time complexity. 
The time complexity of Stochastic Q-Soft-Bayes becomes competitive with Adaptive Frank-Wolfe if $N \gg D \sqrt{ ( 1 / \varepsilon ) \log D }$,  \emph{ignoring the time of computing the local norms}. 
\added{Recently, it is proved that any QST scheme with non-coherent measurement, e.g., ML QST we consider in this paper, requires $N = \Omega ( D ^ 3 / \delta ^ 2 )$ to achieve an estimation error smaller than $\delta$ in the trace distance \cite{Chen2022}. }
The algorithm by \citet{Zimmert2022} has a \( \tilde{O} ( D ^ 3 / \varepsilon ) \) iteration complexity and \( O ( D^6 ) \) per-iteration time complexity ignoring the dependence on other parameters, due to the use of Newton's method to compute the iterates; 
the overall time complexity has a much higher dependence on the dimension than Adaptive Frank-Wolfe and Stochastic Q-Soft-Bayes. 
We conclude that the time complexity of Stochastic Q-Soft-Bayes is competitive \added{compared to}
existing algorithms. 
\looseness=-1

\section{Numerical Results} \label{sec_numerical}

\begin{figure}[!t]

\begin{minipage}{0.48\textwidth}
\includegraphics[scale=.4]{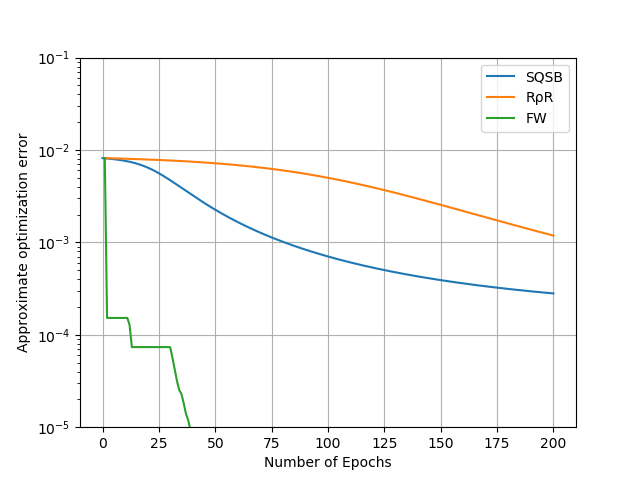}
\caption{Approximate optimization errors in function value of Stochastic Q-Soft-Bayes (SQSB),  \( R \rho R \) \deleted{(RrhoR)},  and Monotonous Frank-Wolfe (FW). 
}
\label{fig_error}
\end{minipage} \hfill 
\begin{minipage}{.48\textwidth}
\includegraphics[scale=.4]{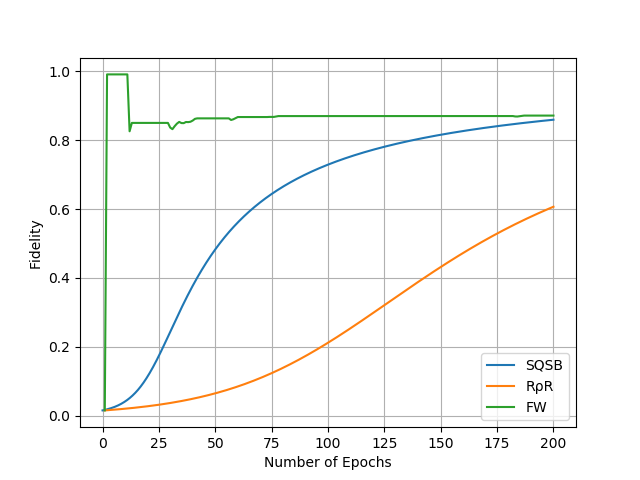}
\caption{Fidelity values of the iterates and the W state achieved by Stochastic Q-Soft-Bayes (SQSB),  \( R \rho R \) \deleted{(RrhoR)},  and Monotonous Frank-Wolfe (FW).}
\label{fig_fidelity}
\end{minipage}

\end{figure}

As discussed above, \added{Stochastic Q-Soft-Bayes}
is competitive in \added{theory.}
\added{We now examine its empirical performance with anytime online-to-batch.}
We compare its empirical speed with two batch methods,  the \( R \rho R \) method \cite{Lvovsky2004, MolinaTerriza2004} and Monotonous Frank-Wolfe \cite{Carderera2021},  on a synthetic data-set in Figure \ref{fig_error} and Figure \ref{fig_fidelity}. 
We have mentioned several batch methods applicable for ML QST in Section 1. 
Among them,  we choose \( R \rho R \) for comparison as it is representative in physics literature and empirically fast, though \emph{it does not always converge}. 
We choose monotonous Frank-Wolfe for comparison as it avoids computationally expensive Hessian computations in step size selection\added{.}
\added{Recall that Monotonous Frank-Wolfe} converges at a \( O ( 1 / t ) \) rate as other Frank-Wolfe methods for self-concordant minimization do \cite{Zhao2020, Dvurechensky2020, Odor2016}\added{, but its complexity guarantee lacks a clear characterization of the dependence on the dimension and sample size}. 

The synthetic data-set is generated basically following the set-up in \cite{Haffner2005}.
The number of qubits \( q \) equals \added{6}. 
The dimension \( D \) then equals {\( 2 ^ q = 64 \)}. 
The unknown quantum state to be measured is the W state. 
We randomly generate {\( N = 4 ^ q \times 100 = 409600 \)} Pauli observables as in, e.g., \cite{Flammia2012,Gross2010,Liu2011},  each of which corresponds to a POVM of two rank-\( ( D / 2 ) \) elements. 
As there are in total \( 4 ^ q \) different Pauli observables (and hence POVMs), each observable is used about 100 times. 
Then,  we sample the \( N \) measurement outcomes and formulate the ML estimator following Section 2.1. 


The performance measures we consider are optimization errors (in objective function) and fidelity values. 
\added{To estimate the optimization error}, we run each algorithm for 200 epochs and use the smallest function value found by the algorithms
as an approximate optimal value. 
The approximate optimization error of an iterate is defined as the difference between the objective function value at the iterate and the approximate optimal value. 
Fidelity is a notion commonly used by physicists to measure how close two quantum states are to each other. 
For any two density matrices $\rho$ and $\sigma$,   the fidelity is given by \( F ( \rho,  \sigma ) \coloneqq \left( \tr \sqrt{ \sqrt{ \rho } \sigma \sqrt{\rho} } \right) ^ 2 \),   which takes values in \( \interval{0}{1} \). 
The fidelity of two quantum states equals 1,  if the two states are exactly the same. 
We plot the optimization errors and fidelity values versus the number of epochs. 
An epoch corresponds to one pass of the whole data-set.
One iteration of Stochastic Q-Soft-Bayes corresponds to \( 1 / N \) epoch. 
One iteration of \( R \rho R \) and Monotonous Frank-Wolfe corresponds to 1 epoch as both algorithms are batch. 

\added{Obviously, Stochastic Q-Soft-Bayes converges faster than $R \rho R$ in both optimization error and fidelity.
Where as Monotonous Frank-Wolfe is the fastest in both figures, this can}
be explained by the fact that Frank-Wolfe tends to generate approximately low-rank iterates. 
The W state corresponds to a rank-1 density matrix, so the ML estimate should be approximately low-rank, matching the structure favoured by Frank-Wolfe.
\added{We conclude that the convergence speed of Stochastic Q-Soft-Bayes is competitive in theory (Section \ref{sec_comparison}) and comparable to fast yet theoretically non-rigorous algorithms in practice.}
\looseness=-1

\added{A comparison}
in terms of the elapsed time \added{is provided} in Appendix \ref{app_numerical}. 
\added{The results show there is a large room for improvement to compete with $R \rho R$ and Monotonous Frank-Wolfe in the elapsed time.}
The \added{source} codes are provided in the supplementary material. 

\section{Discussions} \label{sec_conclusion}

\subsection{Can We Find a Faster Stochastic First-Order Algorithms for ML QST?} \label{sec_faster}

Our approach to constructing a stochastic first-order algorithm for ML QST conceptually applies to any no-regret online portfolio selection algorithm. 
In this paper, we focus on Soft-Bayes. 
Other existing online portfolio selection algorithms have much higher per-iteration time complexities, in terms of the dependence on the ambient dimension and sample size. 
If we adopt any other existing online portfolio selection algorithm and ``quantumize'' it to obtain a stochastic algorithm for ML QST, then the resulting algorithm will scale poorly with the number of qubits. 
Developing an online portfolio selection algorithm that enjoys both a low regret and low time complexity is still 
open 
\cite{vanErven2020,Zimmert2022}.

It is still possible to develop another quantum extension of Soft-Bayes that enjoys a lower per-iteration time complexity. 
The per-iteration time complexity issue may be mitigated if we consider other quantum extensions of Soft-Bayes. 
For example,  if we na\"{\i}vely replace (8) by $W_{t + 1} = \left( G_t W_t + W_t G_t \right) / 2$,  the resulting algorithm still coincides with Soft-Bayes when all matrices share the same eigenbasis,  whereas the per-iteration time complexity is reduced to \added{$O ( D^{\omega} )$ for some $\omega < 2.373$ \cite{Alman2021}.} 
Unfortunately,  we 
\added{cannot work out a} non-asymptotic analysis for any other possible quantum extension of Soft-Bayes we can think of. 

The discussion above assumes that we adopt the online-to-batch argument as in this paper. 
Another way, which we think perhaps more plausible, is to directly consider the stochastic optimization formulation and develop a stochastic optimization algorithm for ML QST. 

\subsection{Connection with Expectation Maximization}

Finally,  let us discuss an interesting connection between Q-Soft-Bayes and expectation maximization (EM). 
The \( R \rho R \) algorithm, according to \cite{Lvovsky2004, MolinaTerriza2004}, was inspired by the expectation maximization (EM) method for solving optimization problems of the form \eqref{eq_kelly}. 
Given a full-rank initial iterate \( \rho_1 \in \mathcal{D} \),  \( R \rho R \) iterates as 
\[
\rho_{t + 1} = \frac{ R_t \rho_t R_t }{ \tr ( R_t \rho_t R_t ) } ,  \quad R_t \coloneqq - \nabla f ( \rho_t ) ,  \quad \forall t \in \mathbb{N} ,  
\]
where \( f \) is defined in \eqref{eq_f}. 
In comparison, given an entry-wise positive vector \( w_1 \in \Delta \),  EM for \eqref{eq_kelly} iterates as 
\[
w_{t + 1} = w_t \circ ( - \nabla \varphi ( w_t ) ) ,  \quad \forall t \in \mathbb{N} . 
\]
It is interesting to notice that even when all matrices involved share the same eigenbasis,  \( R \rho R \) is not equivalent to EM. 
Indeed,  EM is proved to asymptotically converge to the optimum \cite{Cover1984, Csiszar1984},  whereas \( R \rho R \) oscillates on a carefully designed data-set \cite{Rehacek2007}. 
This suggests that \( R \rho R \) is perhaps not a ``natural'' quantum extension of EM. 
Later,  there were variations of \( R \rho R \) that solve the convergence issue by line search \cite{Rehacek2007, Goncalves2014},  but these variations still do not recover EM. 

Notice that the formulation of Soft-Bayes \eqref{eq_soft_bayes} is the convex combination of the previous iterate and \emph{the output of EM}. 
Therefore,  Soft-Bayes,  after the online-to-batch conversion,  can be interpreted as a relaxed stochastic EM method for computing the log-optimal portfolio. 
As Q-Soft-Bayes becomes Soft-Bayes when all matrices involved share the same eigenbasis,  we may claim that Stochastic Q-Soft-Bayes is also a relaxed stochastic EM method,  though its derivation does not have any obvious relation with the standard derivation of EM \cite{Dempster1977}. 

\section*{Acknowledgements}

This work is supported by the Young Scholar Fellowship (Einstein Program) of the Ministry of Science and Technology of Taiwan under grant numbers MOST 109-2636-E-002-025,  MOST 110-2636-E-002-012, and MOST 111-2636-E-002-019 and by the research project “Pioneering Research in Forefront Quantum Computing, Learning and Engineering” of National Taiwan University under grant number NTU-CC-111L894606.


\bibliography{list}

\newpage
\appendix

\section{Supplementary Numerical Results} \label{app_numerical}

\begin{figure}[!htb]

\begin{minipage}{.48\textwidth}
\includegraphics[scale=.4]{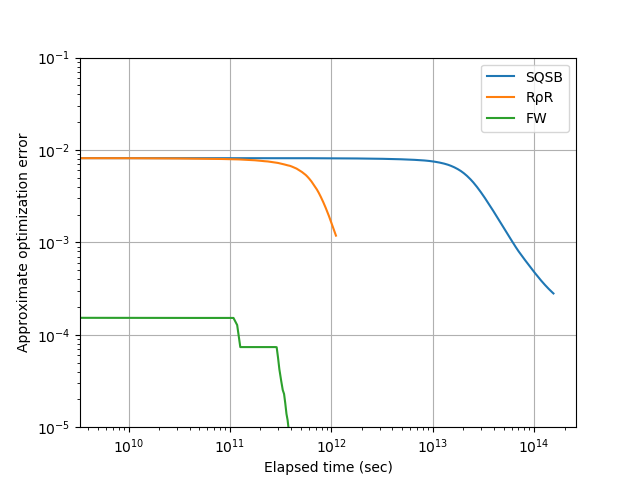}
\caption{Approximate optimization errors in function value of Stochastic Q-Soft-Bayes (SQSB),  \( R \rho R \) \deleted{(RrhoR)},  and Monotonous Frank-Wolfe (FW). 
Notice that the x-axis is the elapsed time instead of number of epochs.}
\label{fig_error_time}
\end{minipage}\hfill
\begin{minipage}{.48\textwidth}
\includegraphics[scale=.4]{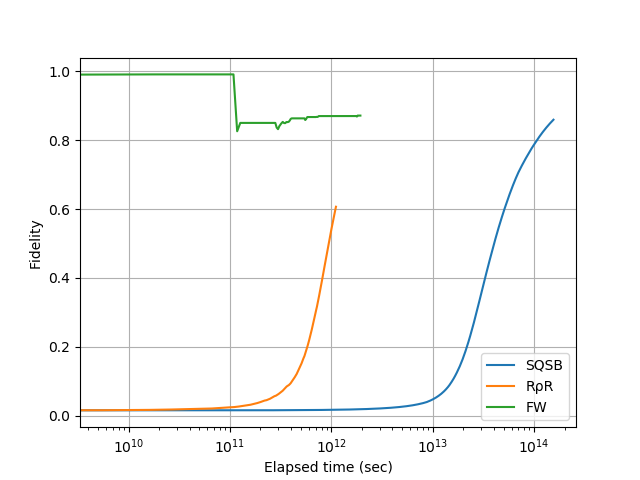}
\caption{Fidelity values of the iterates and the W state achieved by Stochastic Q-Soft-Bayes (SQSB),  \( R \rho R \) \deleted{(RrhoR)},  and Monotonous Frank-Wolfe (FW).
Notice the x-axis is the elapsed time instead of number of epochs.}
\label{fig_fidelity_time}
\end{minipage}

\end{figure}

As the three algorithms we compare in Section \ref{sec_numerical} have different per-iteration computational complexities, it can be misleading to evaluate their convergence speeds only in terms of the number of epochs. 
Figure \ref{fig_error_time} and Figure \ref{fig_fidelity_time} are plotted in terms of the real elapsed time on a \added{server}
with \added{a Intel Xeon Gold 5218 CPU of 2.30GHz and 131621512kB memory. }
The algorithms are implemented with the programming language Julia\added{ and the Math Kernel Library (MKL)}. 
\added{The number of threads for BLAS is set to 8.}

\added{It is easy to evaluate the per-iteration time complexities. }
\added{Every} iteration of Stochastic Q-Soft-Bayes involves matrix exponential and matrix logarithm computations, \added{so} \added{its per-iteration time complexity is $O ( D^3 )$.}
\added{The time to compute the batch gradient is $O ( N D ^ 2 )$.
Therefore, the per-iteration time complexity of Monotonous Frank-Wolfe is $O ( N D ^ 2 )$, dominated by the time of computing the batch gradient; 
the per-iteration time complexity of $R \rho R$ is $O ( N D ^ 2 + D ^ {\omega'} )$, where the $D^{\omega'}$ term comes from the time of computing matrix multiplications.
Here, one iteration of Stochastic Q-Soft-Bayes corresponds to $1 / N$ ``iteration'' of $R \rho R$ and Monotonous Frank-Wolfe.
However, a theoretical comparison as in Section \ref{sec_comparison} is impossible. 
$R \rho R$ does not always converge and hence lacks an iteration complexity guarantee; 
the dependence of the iteration complexity of Monotonous Frank-Wolfe on $N$ and $D$ is unclear. }

\added{The numerical results show that in terms of the elapsed time, Stochastic Q-Soft-Bayes is much slower than the other two algorithms, showing a large room for improvement to compete with practical--yet theoretically non-rigorous---algorithms.
Section \ref{sec_faster} discusses possible approaches to develop faster algorithms.
It should be emphasized again that the elapsed time highly depends on the hardware and implementation details.
}



\section{Proof of Proposition \ref{prop_small_trace}} \label{sec_small_trace}

By the Golden-Thompson inequality,  we write 
\begin{align*}
\tr ( W_{t + 1} ) & \leq \tr W_t \left( ( 1 - \eta ) I + \eta \frac{A_t}{\tr ( A_t \rho_t )} \right) \\
& = ( 1 - \eta ) \tr ( W_t ) + \eta \frac{\tr ( A_t W_t )}{ \tr ( A_t \rho_t ) }  \\
& = ( 1 - \eta ) \tr ( W_t ) + \eta \tr ( W_t ) \\
& = \tr ( W_t ) ,  
\end{align*}
showing that \( \tr ( W_{t + 1} ) \leq \tr ( W_1 ) = 1 \). 

\section{Proof of Theorem \ref{thm_qsb}} \label{app_qsb}

For convenience,  we define \( \braketH{A,  B} = \tr ( A B ) \) for any two Hermitian matrices \( A \) and \( B \). 
Indeed,  the inner product \( \braketH{ \cdot,  \cdot } \) is the Hilbert-Schmidt inner product in literature. 
The following lemma---a reverse Jensen's inequality for the logarithmic function---is a quantum extension of Lemma 4 in \cite{Orseau2017}. 
We omit the proof as it is simply the original proof in \cite{Orseau2017} applied to the eigenvalues of the matrices involved. 

\begin{lemma} \label{lem_reverse_jensen}
Let \( \eta \in ( 0,  1 ) \),  \( \rho \in \mathcal{D} \),  and \( X \in \mathbb{C}^{D \times D} \) be a Hermitian positive semi-definite matrix. 
We have 
\begin{align*}
\log \BraketH{X,  \rho} & \leq \frac{1}{\eta} \braketH{ \log \left( ( 1 - \eta ) I + \eta X \right),  \rho } + \tr \left( \log \left( I + \frac{\eta}{1 - \eta} X \right) \right) . 
\end{align*}
\end{lemma}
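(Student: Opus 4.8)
The plan is to reduce the matrix inequality to its scalar counterpart, namely the classical reverse Jensen inequality of \cite{Orseau2017}, by diagonalizing \( X \). The one feature that makes this reduction work, despite \( X \) and \( \rho \) not commuting, is that \( \rho \) enters the asserted bound only \emph{linearly}: it appears solely inside the Hilbert--Schmidt inner products \( \braketH{X, \rho} \) and \( \braketH{\log((1-\eta)I + \eta X), \rho} \), never as the argument of a nonlinear matrix function (the trace term contains no \( \rho \) at all). Consequently one never needs to diagonalize \( \rho \); only \( X \) is diagonalized, and \( \rho \) contributes merely its diagonal weights in the eigenbasis of \( X \), which---crucially---form a genuine probability distribution because \( \rho \in \mathcal{D} \).

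Concretely, I would first fix the spectral decomposition \( X = \sum_d \lambda_d P_d \), with eigenvalues \( \lambda_d \geq 0 \) since \( X \geq 0 \), and with orthogonal projections \( \{ P_d \} \) satisfying \( \sum_d P_d = I \). Recall that, for a scalar function \( g \), \( g(X) = \sum_d g(\lambda_d) P_d \); in particular every matrix function appearing in the statement is a linear combination of the \emph{same} projections \( P_d \). Setting \( p_d \coloneqq \braketH{P_d, \rho} = \tr(P_d \rho) \), I would verify that \( p = (p_1, \ldots, p_D) \) lies in the probability simplex: \( p_d = \tr(P_d \rho P_d) \geq 0 \) because \( P_d \rho P_d \geq 0 \), and \( \sum_d p_d = \tr\bigl( ( \sum_d P_d ) \rho \bigr) = \tr \rho = 1 \). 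Expanding each term then gives
\[
\BraketH{X, \rho} = \sum_d \lambda_d p_d, \qquad \braketH{\log((1-\eta)I + \eta X), \rho} = \sum_d p_d \log\bigl( (1-\eta) + \eta \lambda_d \bigr),
\]
while, being independent of \( \rho \), the last term is \( \tr\bigl( \log( I + \tfrac{\eta}{1-\eta} X ) \bigr) = \sum_d \log\bigl( 1 + \tfrac{\eta}{1-\eta} \lambda_d \bigr) \).

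Substituting these identities collapses the claimed inequality to the purely scalar statement
\[
\log\left( \sum_d \lambda_d p_d \right) \leq \frac{1}{\eta} \sum_d p_d \log\bigl( (1-\eta) + \eta \lambda_d \bigr) + \sum_d \log\left( 1 + \frac{\eta}{1-\eta} \lambda_d \right),
\]
which holds for every probability vector \( p \in \Delta \) and non-negative reals \( \lambda_d \). This is exactly Lemma 4 of \cite{Orseau2017} instantiated at weight vector \( p \) and return vector \( ( \lambda_1, \ldots, \lambda_D ) \), so I would simply invoke that lemma to finish. I do not expect any genuine obstacle in the calculation; the only point requiring care---and the only thing the non-commutativity could plausibly threaten---is the legitimacy of the reduction itself, i.e.\ that \( p_d = \tr(P_d \rho) \) really is a probability distribution and that all matrix functions share the eigenbasis of \( X \). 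Once these two facts are in hand, the commutative lemma applies verbatim and there is nothing further to prove.
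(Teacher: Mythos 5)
Your proposal is correct and is precisely the argument the paper intends: the paper omits the proof with the remark that it is ``the original proof in \cite{Orseau2017} applied to the eigenvalues of the matrices involved,'' and your reduction---diagonalizing \(X\), noting that \(\rho\) enters only linearly so the weights \(p_d = \tr(P_d\rho)\) form a probability vector, and invoking the scalar Lemma~4 of \cite{Orseau2017}---is exactly that argument made explicit. No gaps; the observation that one never needs to diagonalize \(\rho\) is indeed the crux of why the reduction survives non-commutativity.
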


Now we are able to prove Theorem \ref{thm_qsb}. 

\begin{proof}[Proof of Theorem \ref{thm_qsb}]

By Lemma \ref{lem_reverse_jensen} and the definition that \( \rho_t = W_t / \tr ( W_t ) \),  we write 
\begin{align*}
& - \log \braketH{ A_t,  \rho_t } - \left( - \log \braketH{ A_t,  \rho } \right) \\
& \quad = \log \BraketH{ \frac{A_t}{\braketH{ A_t,  \rho_t }},  \rho } \\
& \quad \leq \frac{1}{\eta} \BraketH{ \log \left( ( 1 - \eta ) I + \eta \frac{ A_t }{\braketH{ A_t,  \rho_t }} \right),  \rho } + \tr \left( \log \left( I + \frac{\eta}{1 - \eta} \frac{A_t}{\braketH{ A_t,  \rho_t }} \right) \right) \\
& \quad = \frac{1}{\eta} \BraketH{ \log \left( ( 1 - \eta ) I + \eta \tr ( W_t ) \frac{ A_t }{\braketH{ A_t,  W_t }} \right),  \rho } + \\ 
& \quad \quad \, \, \tr \left( \log \left( I + \frac{\eta \tr ( W_t )}{1 - \eta} \frac{A_t}{\braketH{ A_t,  W_t }} \right) \right) , 
\end{align*}
for all density matrix \( \rho \) such that \( \braketH{ A_t,  \rho } > 0 \). 
Recall that Proposition \ref{prop_small_trace} ensures \( \tr ( W_t ) \leq 1 \). 
Then,  by the iteration rule of Q-Soft-Bayes,  we write 
\begin{align*}
& - \log \braketH{ A_t,  \rho_t }) - \left( - \log \braketH{ A_t,  \rho } \right) \\
& \quad \leq \frac{1}{\eta} \BraketH{ \log \left( ( 1 - \eta ) I + \eta \frac{ A_t }{\braketH{ A_t,  W_t }} \right),  \rho } + \tr \left( \log \left( I + \frac{\eta }{1 - \eta} \frac{A_t}{\braketH{ A_t,  W_t }} \right) \right) \\ 
& \quad = \frac{1}{\eta} \BraketH{ \log \left( ( 1 - \eta ) I + \eta \frac{ A_t }{\braketH{ A_t,  W_t }} \right),  \rho } + \\
& \quad \quad \, \, \tr \left( \log \left( ( 1 - \eta ) I + \eta \frac{A_t}{\braketH{ A_t,  W_t }} \right) - \log ( 1 - \eta ) I \right) \\
& \quad = \frac{1}{\eta} \BraketH{ \log \left( W_{t + 1} \right) - \log \left( W_t \right),  \rho } + \tr \left( \log \left( W_{t + 1} \right) - \log \left( W_t \right) \right) - D \log ( 1 - \eta ) . 
\end{align*}
By Lemma 13 in \cite{Orseau2017},  we have 
\[ 
- \log ( 1 - \eta ) \leq \overline{\eta} \coloneqq \eta / ( 1 - \eta ).
\] 
Hence,  we get
\begin{align*}
& - \log \braketH{ A_t,  \rho_t } - \left( - \log \braketH{ A_t,  \rho } \right) \\
& \quad \leq \frac{1}{\eta} \BraketH{ \log \left( W_{t + 1} \right) - \log \left( W_t \right),  \rho } + \tr \left( \log \left( W_{t + 1} \right) - \log \left( W_t \right) \right) + \overline{\eta} D . 
\end{align*}
Telescoping the inequality above,  we write 
\begin{align*}
& \sum_{t = 1}^T \left( - \log \braketH{ A_t,  \rho_t } \right) - \sum_{t = 1}^T \left( - \log \braketH{ A_t,  \rho } \right) \\
& \quad \leq \frac{1}{\eta} \BraketH{ \log \left( W_{T + 1} \right) - \log \left( W_1 \right),  \rho } + \tr \left( \log \left( W_{T + 1} \right) - \log \left( W_1 \right) \right) + \overline{\eta} T D ,  
\end{align*}
which indeed provides an upper bound on \( R_T \) as it holds for all possible \( ( A_t )_{t \in \mathbb{N}} \). 
Notice that \( W_1 = \rho_1 = I / D \) and again,  by Proposition \ref{prop_small_trace},  we have 
\[
\log \left( W_{T + 1} \right) = \log \left( \rho_{T + 1} \tr ( W_{T + 1} ) \right) \leq \log \left( \rho_{T + 1} \right) . 
\]
Then,  we write 
\begin{align*}
R_T & \leq \frac{1}{\eta} \BraketH{ \log \left( \rho_{T + 1} \right) - \log \left( \rho_1 \right),  \rho } + \tr \left( \log \left( \rho_{T + 1} \right) - \log \left( \rho_1 \right) \right) + \overline{\eta} T D \\
& = \frac{1}{\eta} \BraketH{ \log \left( \rho_{T + 1} \right) - \log \left( \rho_1 \right),  \rho } + D \BraketH{ \log \left( \rho_{T + 1} \right) - \log \left( \rho_1 \right),  \rho_1 } + \overline{\eta} T D \\
& = \frac{1}{\eta} \left( S ( \rho \Vert \rho_1 ) - S ( \rho \Vert \rho_{T + 1} ) \right) - D S ( \rho_1 \Vert \rho_{T + 1} ) + \overline{\eta} T D ,  
\end{align*}
where \( S \) denotes the quantum relative entropy,  i.e.,  
\[
S ( \rho \Vert \sigma ) \coloneqq \braketH{ \log \left( \rho \right) - \log \left( \sigma \right) ,  \rho } . 
\]
It is known that the quantum relative entropy is non-negative and \( S ( \rho \Vert I / D ) \leq \log D \) for all \( \rho \in \mathcal{D} \). 
Therefore,  we obtain
\begin{align*}
R_T \leq \frac{1}{\eta} \log D + \overline{\eta} T D = \frac{1}{\overline{\eta}} \log D + \overline{\eta} T D + \log D . 
\end{align*}
The regret bound is optimized when 
\[ 
\overline{\eta} = \sqrt{ \frac{\log D}{T D} } . 
\] 

\end{proof}


%
%
%
%

\end{document}